\xpatchcmd{\@todo}{\setkeys{todonotes}{#1}}{\setkeys{todonotes}{inline,#1}}{}{}
\providecommand{\ceil}[1]{\left \lceil #1 \right \rceil }
\providecommand{\floor}[1]{\left \lfloor #1 \right \rfloor }
\begin{document}

\title{A resource-efficient method for repeated HPO and NAS}

\author{\name Giovanni Zappella \email zappella@amazon.de \\
       \addr Amazon Web Services, Berlin, Germany
       \AND
       \name David Salinas \email dsalina@amazon.fr \\
       \addr Amazon Web Services, Berlin, Germany
       \AND
       \name Cedric Archambeau \email cedrica@amazon.de \\
       \addr Amazon Web Services, Berlin, Germany
       }

\maketitle

\begin{abstract}
We consider the problem of repeated hyperparameter and neural architecture search (HNAS).%
We propose an extension of Successive Halving that %
leverages information gained in previous
HNAS problems with the goal of saving computational resources.
We empirically demonstrate that our solution is robust to negative transfer and drastically decreases 
cost while maintaining accuracy. Our method is significantly simpler than competing transfer learning approaches, setting a new baseline for 
transfer learning in HNAS.
\end{abstract}

\section{Introduction}
When solving a prediction problem with deep learning it is important to search over network architectures and to tune the network hyperparameters. However, conditions under which a deep neural network is deployed might drift away from the experimental conditions considered when it was developed. Hence, best practice recommends periodically updating the network architecture and retuning its hyperparameters. Unfortunately, practitioners often do not perform hyperparameter and architecture search (HNAS) once they have settled on a tuned deep neural network, because the process of updating and retuning the network can be time-consuming and costly \cite[e.g., see][Fig.5]{fcnet}.
Several research groups proposed solutions for transfer learning in HNAS (see related work in Appendix~\ref{a:related}) to ease this process, %
but most %
assume
a large %
number of historical evaluations are available. %
In many real-world scenarios, sharing evaluations can be problematic as the selected architectures and hyperparameter configurations can capture confidential information, such as click-rates or churn-rates, and historical evaluations might reside in models built by competing industrial players. 
In this work, we propose an extension of Successive Halving \citep{jamieson2016non, karnin2013almost} which can %
reduce the usage of computational resources %
when a sequence of HNAS problems have to be solved. The method combines the robustness of
testing a large set of candidate configurations and an aggressive pruning strategy 
which leverages the best evaluations from previous tuning jobs. 
This combination allows the algorithm to work with a small amount of previous evaluations
in a reliable way.
Our experiments show that our approach %
provides great savings %
when historical evaluations are available, regardless of their amount,
and is robust to negative transfer~\citep{wang2019characterizing}.%

\section{Problem setup}
We frame repeated HNAS as a sequence of Best Arm Identification (BAI)
problems \cite[see][Ch.~33, and references therein]{audibert2010best,lattimore2020bandit} where 
the total number of steps in the sequence is unknown. 
Each of the BAI problems follow the non-stochastic %
setting described by \cite{jamieson2016non}.
At a high level, each combination of hyperparameters corresponds to an arm in the bandit problem
and each arm pull can be seen as the amount of resources consumed (e.g., one epoch) for training the %
model with these hyperparameters. %
The arms are selected by an independent procedure (e.g., uniform random sampling)
and provided to the tuning algorithm, which is agnostic to this external procedure.
The setting used to define these problems is the non-stochastic bandit setting: the
losses suffered by the learner are not sampled from an unknown fixed distribution as in the stochastic
bandit setting, but %
become smaller over time 
and thus subject to a less restrictive assumption. 
For example, if a set of arms is defined as the number of neurons per layer
and the number of layers of a neural network, we will train several neural networks in parallel 
using different combinations of these values. Each network will be
trained for a number of epochs which is the number of pulls assigned to the arm representing it. 
In this example it is %
easy to see that the initial validation loss of the network will be large
and %
will get smaller the more we train the network, until convergence is reached (see $\nu$ below).\\
More formally, in the non-stochastic BAI problem, 
the learner is provided with a set of arms $A = \{a_1, \dots, a_k\}$.
If the learner
decides to play arm $a$ at time $t$, it will suffer a loss $\ell_{a,t} \in \mathbb{R}$.
We assume that for each arm $a$, there exists $\nu_a$ such that $\nu_a = \lim_{t \to +\infty} \ell_{a,t}$.
The goal of the learner is to identify $\arg\min_{a \in A} \nu_a$ by using at most a pre-defined
number of arm pulls. %
The existence of the limit for the losses implies the existence
of a non-increasing function $\gamma_a$ that bounds the distance to the limit with $|\ell_{a,t} - \nu_a| \leq \gamma_a(t)$. 
We define $\gamma^{-1}_a(\alpha) = \min\{ t \in \mathbb{N} : \gamma_a(t) \leq \alpha\}$ which gives %
the smallest $t$ 
required to reach a given distance to the limit and $\bar{\gamma}^{-1}(\alpha) = \max_{a=1,\dots,k} \gamma_a^{-1}(\alpha)$. 
Without loss of generality, we assume $\nu_1 = \min_{i\in A} \nu_i$ and
define $\Delta_a = \nu_a - \nu_1$, as well as 
$\tau_a=\min_{\tau \in \mathbb{N}^+} s.t. \gamma_a(\tau) + \Delta_a > \gamma_1(\tau) $. Our setting extends the non-stochastic
BAI problem by considering a sequence ${1, \dots, S}$ of such problems.
Arms %
identifiers 
remain unique across problems but no assumptions are made and their performance can vary over the sequence. 
Let $\tilde{a}_s$ be the arm identified as the best one by the learner. The goal is to minimize $\sum_{s=1}^S \Delta_{\tilde{a}_s}$ without
exceeding the budget $B$ of arm pulls allocated for each problem.

\section{Method}
In order to provide a simple and effective algorithm for the repeated HNAS problem we make
some additional assumptions. \citet{winkelmolen2020practical} empirically demonstrate that a small set of 
hyperparameters configurations can provide
effective solutions for an heterogeneous set of tuning tasks. 
Along the same lines,
we assume that the size of the set of optimal configurations $A_S^{\ast}$ for a sequence of $S$ tasks satisfies $|A^{\ast}_S| < S$.
The purpose of this assumption is to have an algorithm using a fixed maximum amount of resources 
over all the tasks in the sequence (otherwise the resources consumption would grow logarithmically in total number of
arms). In practice, depending on the nature of the application, several heuristics can be used to control 
the size of $A_S^{\ast}$ when sequences are %
long. 
For instance, one could use random subsampling, a FIFO queue, or cluster the configurations. 
We will further assume that configurations previously identified as optimal will
quickly be either identified as optimal or outperformed by the optimal configuration.
Hence, define $z\ \mathrm{s.t.} \forall a \in A^{\ast}_S, \tau_a \leq z$.  
This assumption formalizes the empirical observations where configurations identified as
optimal on a task can perform very poorly on another one, quickly becoming inferior
to the best configuration on the current task. When that is not the case, they are 
often optimal or very close to optimal.
For convenience we define $n$ to be the cardinality of the union between the arms
provided for the problem at hand and the arms identified as optimal in the previous steps. 
\begin{wrapfigure}[24]{r}{0.6\textwidth}
\begin{minipage}{0.6\textwidth}
\begin{algorithm}[H]
   \caption{Repeated Unequal Successive Halving (RUSH)}
   \label{rush}
   \small
\begin{algorithmic}
  \State {\bfseries Input:} $\eta$ (halving hyper-parameter), $B$ (budget)

 \State $|A_0^{\ast}| \gets \emptyset$
 \State $s \gets 0$
 \While{a new task is available}

  \State $A_s^{new} \gets \text{set of new arms}$
  \State $A_s^0 \gets A_s^{new} \cup A_s^{\ast}$
  \State $n \gets |A_s^0|$\;

  \For{$k = 0, \dots, \ceil{\log_{\eta}{n}}-1$ }
    \State $\forall a \in A_s^k$, pull it $\floor{\frac{B}{\max(1, \floor{n/\eta^{k+1}}) \ceil{\log_{\eta}{n}}}}$ times
    \State $\forall a$, $r_a \gets$ position of $a$ in ranking by loss
    \State $r^{\ast} \gets \min(r_i), \forall i \in A_s^{\ast}$
    \State $A_s^{k+1} = $
    \State $\{ i \in A_s^k : r_i < \max(\min(r^{\ast}+1, \floor{n/\eta^{k+1}}), 1) \}$
 \EndFor

  \State $\tilde{a} \gets \text{best arm from~} A_s^{\ceil{\log_{\eta}{n}}-1}$
  \If{$\tilde{a} \not\in A_{s}^{\ast}$}
    \State $A_{s+1}^{\ast} \gets A_{s}^{\ast} \cup \{\tilde{a}\}$
  \EndIf
  \State $s \gets s+1$
\EndWhile
\end{algorithmic}
\end{algorithm}
\end{minipage}
\end{wrapfigure}
Based on those assumptions we can %
propose a variant of Successive Halving (SH) \citep{jamieson2016non, karnin2013almost},
called Repeated Unequal Successive Halving (RUSH), using the previously discovered optimal arms in 
$A_S^\ast$ to determine if the optimal solution for the problem at hand has already been observed or not.
The algorithm, formally described in Algorithm~\ref{rush}, works as follows: for each BAI problem, 
a set of configurations (arms) is
sampled uniformly at random from the search space. RUSH splits the budget in equal parts to be consumed sequentially.
At each step in the sequence the available budget is allocated uniformly on the available 
arms. At the end of the step the number of candidates is reduced by a factor $\eta$ according to their performance.
Moreover, RUSH stops the exploration of any arm under-performing an arm in $A_S^\ast$, saving resources.
This possible because the value of $\tau$ for every arm in $A_S^{\ast}$ is buonded by $z$ and
the algorithm receives a budget as described in Theorem~\ref{thm:correct}.
We can also show that RUSH is able to 
correctly identify the best arm for each problem in the sequence. The proof is available
in Appendix~\ref{a:proof}.
\newtheorem{rem}{Theorem}
\begin{rem}
\label{thm:correct}
If the budget B provided to the algorithm for each step of the sequence $1, \dots, S$
is larger than $\ceil{\log n} \max \left(2 n + \sum_{a = 2, \dots, n} \bar{\gamma}^{-1}(\Delta_a/2), z n \right)$ 
then Algorithm~\ref{rush} will correctly identify the best arm.\\
\vspace{-0.2cm}
\end{rem}
It is %
possible to combine RUSH with non-uniform sampling \cite[e.g., ][]{klein2020modelbased}, 
or other transfer learning techniques \cite[e.g., ][]{perrone2019learning} since RUSH and SH are agnostic to 
the process generating the set of candidate arms.

\section{Experiments}
In order to perform experiments in a realistic environment 
we created sequences of tuning tasks obtained by training the same algorithms on different, but 
related, datasets. We created six collections of datasets: 3 obtained by pre-processing the same 
dataset in 20 different ways
and tuning the hyperparameters of an XGBoost classifier, 2 obtained from public NAS benchmarks and
1 obtained mixing tasks from the three XGBoost-based collections described above in order to test
the robustness of the approach. 
All the details about the creation of the datasets are available in Appendix~\ref{a:data}.
For every experiment, we created sequences of 20 tuning tasks training the same algorithm on a random permutations of the 
datasets in the considered collection (without
replacement if the collection is large enough). We repeated this operation 25 times for a total of 500 tuning jobs 
for each experiment. Results were averaged both over the 25 runs and the 20 datasets in each collection,
obtaining a single predictive performance and cost for each pair (tuning algorithm, dataset collection).
We compare RUSH and a version of Hyperband using RUSH as subroutine instead of SH (called HB-RUSH) with 
the ones obtained by state-of-the-art solutions: 
\textbf{Successive Halving (SH)}~\citep{jamieson2016non}, from
which we derived RUSH, a bandit algorithm for non-stochastic best arm identification;
\textbf{Hyperband (HB)}~\citep{hyperband}, a bandit method for hyper-parameter 
tuning using Successive Halving as subroutine with different level of resources allocation; and
we also compared our approach with a different class of algorithms, the \textbf{Bounding Box (BB)} approach 
by~\cite{perrone2019learning}. The BB approach is not designed for the same setting of RUSH, 
so we provide for every tuning job in the sequence
all the information about all the remaining tuning jobs (past and future), moreover we do not provide the
results of the evaluations obtained with the optimizer but the ``ground truth'' values of all the configurations evaluated.
This makes the comparison unfair, since RUSH will have access to information
from the same number of tuning tasks only in the last step of the sequence and it will never have certainty
about the performance of the optimal configuration. On the other hand, even if the comparison is unfair
and is not always possible to run BB in practice, it will provide a good reference point.
We selected the BB approach because it is the best performing algorithms 
in the Hyperband experiments in \citep{perrone2019learning}. 
BB is not used in the experiment in Appendix~\ref{a:negativetransfer} since it
does not provide any protection against negative transfer.
In our comparison, we set the hyper-parameter $\eta = 3$ for all the instances of RUSH, SH and HB. 
The budget for HB-RUSH and HB is set as described in \citep{hyperband} and for RUSH and SH it is
equivalent to a single bracket of HB.

\begin{table*}[t]
\centering
\tiny
\begin{tabular}{lrrrrrr}
\toprule
Dataset &  BALMIX &   BANK &  CCDEFAULT &  COVTYPE &  FCNET &  NAS201 \\
\midrule
SH-Autorange(BB)          &   0.630 &  0.691 &      0.950 &    0.181 &    0.077 &     -- \\
HB-Autorange(BB)    &   0.611 &  0.666 &      0.932 &    0.174 &    0.075 &     -- \\
\midrule
RUSH                            &   0.546 &  0.693 &      0.978 &    0.209 &  0.079 &  31.897 \\
SH                                  &   0.553 &  0.698 &      0.980 &    0.210 &  0.080 &  32.433 \\
RUSH-HB                               &   0.537 &  0.678 &      0.968 &    0.186 &  0.075 &  31.355 \\
HB                         &   0.542 &  0.691 &      0.978 &    0.186 &  0.078 &  31.908 \\
\bottomrule
\end{tabular}
\caption{Average prediction performance obtained by different optimizers. For BANK and CCDEFAULT we report 1-F1, for COVTYPE we report 1-F1micro, for FCNET and NAS201 the prediction error. In all cases lower is better.}
\label{tab:acc}
\end{table*}
\subsection{Predictive performance comparison.}
\label{ssec:pred}
The results in Table~\ref{tab:acc} show that the performance of RUSH are on par with 
the performance of SH, with differences in the order of the standard deviation on most datasets and tiny
advantages for one or the other in the other cases. A similar trend can be observed for HB-RUSH and HB.
The standard deviation values are reported in Appendix~\ref{a:std} and, while some variance is present, it does
not seem to prevent the usage of this method in real-world applications. Autorange(BB) displays a trend of
slightly better results, but for most datasets the differences are in the order of the standard deviation. 
It is important to observe that RUSH performance are on-par also on BALMIX, which is an heterogeneous set of tasks.
Results showing the robustness of RUSH performances to the choice of the
budget are available in Appendix~\ref{a:diffbudget}.
\begin{figure*}[t]
\centering
\begin{tabular}{ccc}
\includegraphics[width=0.3\textwidth]{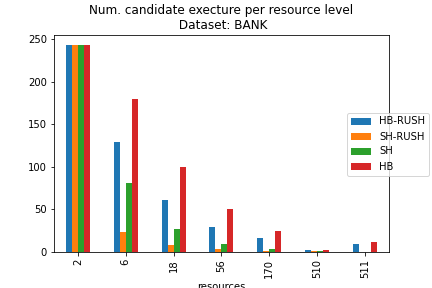}&
\includegraphics[width=0.3\textwidth]{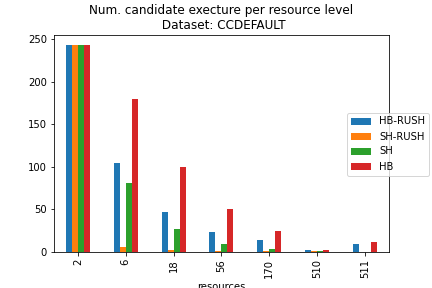} &
\includegraphics[width=0.3\textwidth]{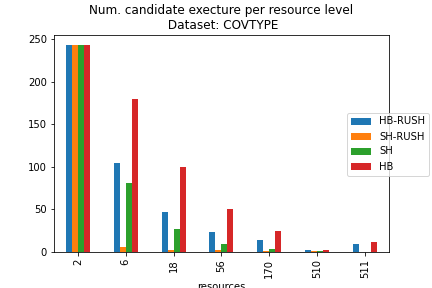} \\
\includegraphics[width=0.3\textwidth]{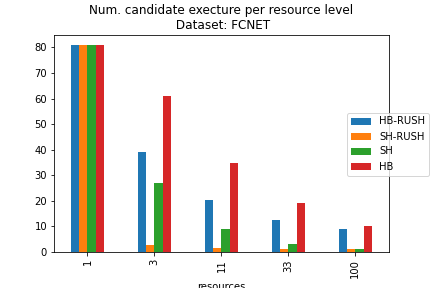}&
\includegraphics[width=0.3\textwidth]{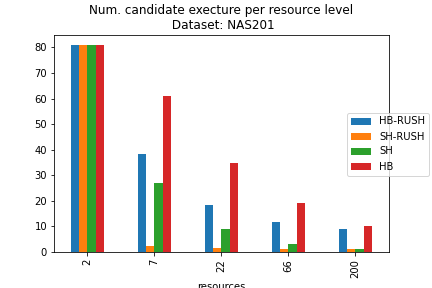} &
\includegraphics[width=0.3\textwidth]{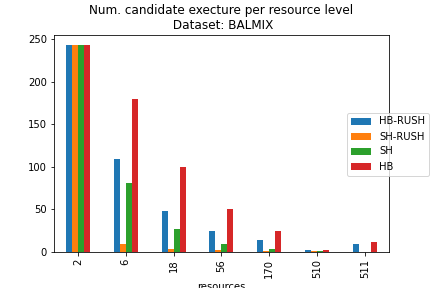}  
\end{tabular}
\caption{Number of evaluated candidates per resources level. Lower is better.}
\label{tab:numcand}
\end{figure*}
\subsection{Resources consumption comparison.}
\begin{wraptable}[11]{r}{0.5\textwidth}
\tiny
\centering
\begin{tabular}{lrr}
\toprule
Dataset &  RUSH vs SH &  HB-RUSH vs HB \\
\midrule
BALMIX    &           41.044 &           18.336  \\
BANK      &           34.465 &           17.345  \\
CCDEFAULT &           45.627 &           18.646  \\
COVTYPE   &           45.932 &           26.941  \\
FCNET    &            0.587 &            0.657  \\
NAS201    &           48.547 &           19.357 \\
\bottomrule
\end{tabular}
\caption{Average time reduction (in percentage) achieved by RUSH and HB-RUSH compared to SH and HB. Higher is better.}
\label{tab:runtime}
\end{wraptable}
After observing that the predictive performance of the models produced with
RUSH are on-par with other optimizers, we studied
the resources consumption of different optimizers.
To this purpose, we track the number of configurations evaluated by different
optimizers for different resources levels (e.g., how many configurations were evaluated for a certain number of epochs). 
The number of configurations evaluated is a proxy metric for the consumption of computational resources (more
configurations evaluated, more resources used).
The results reported in the Figure~\ref{tab:numcand} 
show a very clear trend: HB-RUSH and RUSH require less evaluations than SH and HB, in some cases even an order of magnitude less.
Since in modern cloud services customers get billed according
to the time spent using the computational resources, and since lower computational
resources usage could be a proxy for less resources used (e.g., energy), 
we would like to quantify the total computation time saved by using RUSH.
The results in Table~\ref{tab:runtime} mostly confirm the findings reported in the first part 
of this section with massive reduction of consumed resources on most datasets.
There is a notable exception on which RUSH provides only a small advantage: FCNET. 
This is due to the huge variance in the time-per-epoch observed for the configurations available for FCNET
which can make extremely expensive to reach the first checkpoint of evaluation in the algorithm. 
An additional investigation is available in Appendix~\ref{a:fcnet}.

\subsection{Negative transfer experiment}
\label{a:negativetransfer}

\begin{wraptable}[9]{R}{0.5\textwidth}
\centering
\tiny
\begin{tabular}{lrr}
\toprule
Dataset &      BANK1 &  INVBANK1 \\
\midrule
RUSH     &  0.610755 &      0.0 \\
SH          &  0.609984 &      0.0 \\
HB-RUSH       &  0.604140 &      0.0 \\
HB &  0.604886 &      0.0 \\
\bottomrule
\end{tabular}

\caption{Predictive performance of the models produced by HPO with different optimizers and tasks.}
\label{tab:negres} 
\end{wraptable}

To provide further evidence of the robustness of RUSH, we created an ad-hoc experiment with two
tasks where the ranking by predictive performance is completely reversed. The best configuration
on one task is the worst configuration on the other and vice versa. 
In this case the information provided by the previous tasks is not only useless but detrimental.
Considered one of the tasks in the BANK dataset collection, we created the inverse (called INVBANK1)
simply by setting the F1 score of evaluation of each configuration to 1-F1.
This is a worst-case scenario which will prevent many transfer learning algorithms from identifying
the optimal solution (e.g., Autorange) or will cause a significant cost increase
since the optimizer will start evaluating poorly performing configuration (e.g., BO-related methods).
The sequences in this case are composed only by the tasks BANK1 and INVBANK1 and the order
is randomly selected. Results are averaged on 25 repetitions.
The results reported in Table~\ref{tab:negres} show that, in fact,
RUSH and HB-RUSH match the performance of their ``standard'' counterparts both in terms of predictive 
performance. Also, this performance is achieved without additional cost, see results
in Appendix~\ref{a:negativetime}.

\section{Conclusion and future work}
In this work we introduced a %
variant of Successive Halving targeting the
common use case of repeated %
HNAS. We characterized the problem from a
formal perspective and provided an effective algorithm for solving it.
We tested the new algorithm, RUSH, on a number of different benchmarks involving both 
Neural Networks and tree-based predictors.
RUSH %
reduced the usage of compute %
required for %
HNAS. In most cases we observed a double digit cost reduction and on-par predictive performance.
We also created mixed tuning jobs collections (i.e., BALMIX) and ad-hoc experiments to
demonstrate empirically the robustness of %
RUSH to negative transfer. %
We observed lesser improvements when the variance of the evaluation time of the configurations became extremely high, suggesting that 
a cost-sensitive version of RUSH following the one already created for SH and HB \citep{CAHB} could be of 
interest. 
Another %
research direction would be %
to consider RUSH-like approaches for HNAS in the continual learning setting (e.g., see \cite{aljundi2019task, borsos2020coresets}).

\vskip 0.2in
\bibliography{biblio}

\newpage

\appendix
\section*{Appendix}
\appendix

\section{Related work}
\label{a:related}
In the AutoML literature there are few important families of methods doing transfer learning
for HPO and NAS problems. In the following we will present the main approaches:%
\begin{itemize}
\item \textbf{Transfer learning for Bayesian Optimization (BO).} 
There are a number of approaches for transfer learning in Bayesian Optimization (e.g, \cite{joy2019flexible,feurer2015initializing, ABLR, salinas2019copula}) but they are often focusing on 
scenarios with large amounts of evaluations from which to transfer (e.g., see \cite{feurer2015initializing, ABLR})
or do not provide guarantees about the resources consumption to achieve ``optimal'' performance
in presence of negative transfer (e.g., \cite{salinas2019copula}).
Algorithms such Successive Halving or Hyperband are significantly more resource-efficient 
\citep{pmlr-v80-falkner18a}, also because they make a different trade-off between end to 
end wall-clock time and resources consumption.
This trade-off favors a large number of parallel computations and shorter waiting time, which lead to the usage 
these algorithms in different practical scenarios,
making their comparison of limited usefulness from a practical perspective.
\item \textbf{Transfer learning for Hyperband.} Transfer learning for Hyperband-style algorithms is not an
extensively studied topic. Methods such the one in \citep{Valkov2018AST} strongly focus on
increasing predictive performance instead of decreasing the consumption of resources.
\item \textbf{Learning the search space}~\citep{perrone2019learning} is an effective way to sample configurations
leading to good results, especially when the search space is not correctly specified.
On the other hand, this approach can have some difficulties when the
number of previous HPO evaluations is extremely small or when the set of HPO evaluations are
performed on an heterogeneous set of tasks. Moreover, this kind of improvements can
be combined with a large number of HNAS optimizers, including the one presented in this paper.
\end{itemize}

The work described in \citep{stoll2020hyperparameter} on the surface may look similar 
to the setting considered in this work
but there are substantial differences. The main differences is that we consider a sequence of 
tasks as an ``adjustments'' to data transformation procedures, pre-processing pipelines, etc., 
while the HT-AA setting is mostly focusing on changes to hardware, ML algorithms and search space (Section 2).
While a changing search space presents an interesting scientific problem and our approach can be adapted to that, a
fixed search space for a sequence of tasks is a perfectly valid assumption
for applications we are interested in.
Moreover, our evaluation is not designed to reach a pre-defined level of accuracy, 
but to reach the same performance of
Successive Halving using a reduced amount of resources. 
A detail that contrast with \citep{stoll2020hyperparameter} (Section 5),
where the authors report a trend of decreasing speedup when the target objective becomes ``more optimal''.
Last but not least, the HT-AA setting is currently considered for transfer 
from a single task to another, while in our case
there is no previous knowledge about the similarity among the tasks 
and the sequences of tasks presented to the learner
in our experiments can be extremely heterogeneous.

\section{Proof of Theorem 1}
\label{a:proof}
\newtheorem*{theorem*}{Theorem}
\begin{theorem*}
If the budget B provided to the algorithm for each step of the sequence $1, \dots, S$
is larger than $\ceil{\log n} \max \left(2 n + \sum_{a = 2, \dots, n} \bar{\gamma}^{-1}(\Delta_a/2), z n \right)$ 
then Algorithm~\ref{rush} will correctly identify the best arm.
\end{theorem*}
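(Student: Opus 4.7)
The plan is to prove by induction on the round index $k$ that the unique optimal arm (arm $1$) remains in the candidate pool $A_s^k$ at every round $k = 0, 1, \dots, \ceil{\log_\eta n} - 1$ for every problem $s$ in the sequence. Since after the last round the pool has shrunk to a single arm, and Algorithm~\ref{rush} returns the best arm from the final $A_s^k$, this invariant is exactly what we need.

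First I would translate the budget hypothesis into a per-round per-arm pull count. In round $k$ each surviving arm is pulled $t_k := \floor{B / (\max(1, \floor{n/\eta^k}) \ceil{\log_\eta n})}$ times. Under the assumption of the theorem, $t_k$ simultaneously satisfies (i) $t_k \geq z$, which will govern the comparison of arm $1$ against the anchor arms in $A_s^\ast$, and (ii) $\floor{n/\eta^k} \cdot t_k \geq 2n + \sum_{a=2}^n \bar{\gamma}^{-1}(\Delta_a/2)$, which will power the standard Successive Halving argument.

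Second I would verify that arm $1$ is ranked strictly above every anchor in $A_s^\ast$. Since $\tau_a \leq z \leq t_k$ for every $a \in A_s^\ast$ and $\gamma_a$ is non-increasing, $\gamma_a(t_k) + \gamma_1(t_k) < \Delta_a = \nu_a - \nu_1$, which forces $\ell_{a, t_k} > \ell_{1, t_k}$. Hence the empirical rank of arm $1$ is strictly smaller than $r^\ast$ at every round. Third, I would invoke the Jamieson--Talwalkar argument for the alternative cutoff $\floor{n/\eta^k}$: if arm $1$'s empirical rank exceeded $\floor{n/\eta^k}$, then at least $\floor{n/\eta^k}$ arms $a$ with $\Delta_a > 0$ would have empirical loss below that of arm $1$, each forcing $t_k < \bar{\gamma}^{-1}(\Delta_a / 2)$; summing those inequalities and using bound (ii) yields a contradiction. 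Combining the two rank bounds gives that arm $1$'s rank is strictly less than $\max(\min(r^\ast, \floor{n/\eta^k}), 1)$, so arm $1$ survives the filtering and the induction closes.

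The main obstacle will be the bookkeeping around the enlarged candidate pool $A_s^1 = A_s^{new} \cup A_s^\ast$: its cardinality $n$ is inflated by the anchor set relative to a fresh SH run, and one has to check that the ``$2n$'' slack in the theorem statement (rather than the usual ``$n$'' in the vanilla SH analysis) cleanly absorbs both the floor rounding losses and the additional per-round work induced by dragging the anchor arms through every round. A secondary delicate point is that $\tau_a$ is defined entirely intra-problem, so the uniform bound $\tau_a \leq z$ over $A_S^\ast$ must be invoked afresh for each $s$; once the within-problem guarantee is in place, the cross-problem claim follows by a union over $s$.
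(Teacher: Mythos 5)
Your proposal is correct and follows essentially the same route as the paper's (much terser) argument: the first term of the max invokes the Jamieson--Talwalkar Successive Halving guarantee for the $\floor{n/\eta^k}$ cutoff, while the second guarantees every arm is pulled at least $z$ times per rung so that, by the definition of $\tau_a \leq z$, the incumbent-based cutoff $r^{\ast}$ can never eliminate the optimal arm. Your explicit per-round induction combining both rank bounds (and your observation that both budget conditions hold simultaneously rather than in separate cases) is a more careful rendering of the same idea than the paper's two-sentence sketch, but it introduces no new ingredient.
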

\begin{proof}
This results follows from the fact that when the first term in the max function is larger, then
we are exactly in the same case of \citep[Theorem 1]{jamieson2016non}. When the second argument is larger,
then we are guaranteed that no arm will be eliminated before each arm (including the ones in $|A^{\ast}|$)
has been pulled at least $z$ times and so, by definition of $z$, there is no chance for to discard 
the optimal arm by eliminating the arms performing worse than the ones in $|A^{\ast}|$ and since the 
budget is larger than what is required by Successive Halving, there is no risk in discarding the arms at 
the bottom of the ranking.
\end{proof}

\section{Datasets}
\label{a:data}
For the purpose of these experiments we used two different families of datasets. The first
family comprises public benchmarks developed for the NAS problem:
\begin{itemize}
        \item \textbf{FCNET}: a collection of NAS problems where the learner needs to identify
        the best architecture for a simple neural network on four different datasets. 
        The details about this collection of benchmarks are provided in \citep{fcnet}.
        \item \textbf{NAS201}: a collection of NAS problems where the learner needs to identify
        the best architecture for a neural network on three different computer vision datasets (CIFAR10-valid, CIFAR100 and IMAGENET). 
        The details about this collection of benchmarks are provided in \citep{nas201}.
\end{itemize}
These datasets are not specifically designed to benchmark algorithms in our setting but they
are useful to our purposes since they are widely know and provide a clear reference point to better
understand our results.

The second family is a set of benchmarks developed starting from public datasets and applying different
pre-processing steps as a data scientists would do. The selected datasets, all from UCI are the following:
\begin{itemize}
        \item \textbf{CCDEFAULT}~\citep{yeh2009comparisons}: a binary classification dataset for which the learner has to 
        predict the default of a credit card customer. The features of each customers involve demographic
        information and payments-related information, both in numerical and categorical form.\\
        \url{https://archive.ics.uci.edu/ml/datasets/default+of+credit+card+clients}
        \item \textbf{BANK}~\citep{moro2014data}: a binary classification dataset from which the learner has to predict responses
        to marketing campaigns sent by a Portuguese banking institution. Features are both categorical and
        numerical. The feature causing target leakage has been dropped.\\
        \url{https://archive.ics.uci.edu/ml/datasets/Bank+Marketing}
        \item \textbf{COVTYPE}: a multi-class classification dataset for which the learner has to
        predict the vegetation type given information regarding the piece of land considered (e.g., altitude).
        The data points only have numerical features and are categorized in seven classes.\\
        \url{https://archive.ics.uci.edu/ml/datasets/covertype}
\end{itemize}

For each one of these public datasets we created a collection of 20 derived datasets applying different pre-processing
techniques to different attribute columns. The pre-processing methods applied to the different features were
selected independently and uniformly at random according to the kind of feature considered (categorical or numerical).
We only used preprocessors acting on a single attribute/feature but nothing prevents the usage of
our approach with more complex preprocessing pipelines.

Preprocessing options available for categorical attributes:
\begin{itemize}
        \item \textbf{ONE HOT ENCODER}: encodes categorical features using one binary feature
        per category.
        \item \textbf{BACKWARD DIFFERENCE ENCODER}: contrast coding of categorical variables. 
        Features with zero variance, if taken into consideration, are dropped.
        \item \textbf{ORDINAL ENCODER}: encodes categorical features into one single
        ordered feature.
        \item \textbf{BASE-N ENCODER}: encodes the categories into arrays of their base-N representation.
        \item \textbf{DROP}: eliminates the considered feature.
\end{itemize}

Preprocessing options available for numerical attributes:
\begin{itemize}
        \item \textbf{STANDARD SCALER}: transforms the features subtracting the mean and dividing by the standard deviation.
        \item \textbf{MINMAX SCALER}: transforms the features by subtracting the smallest value and dividing by the size of 
        the (min,max) range.
        \item \textbf{BINARIZER}: transforms values to zero if the are below a certain threshold, to one otherwise.
        The mean value is used as a threshold.
        \item \textbf{QUANTILE TRANSFORMER}: transforms numerical features by replacing them with their quantile identifier.
        \item \textbf{DROP}: eliminates the considered feature.
\end{itemize}
The implementations of the categorical features transformers are available in the Category Encoders package\footnote{\url{https://contrib.scikit-learn.org/category_encoders/}}
and the ones for the numerical features transformer are available in Scikit-learn~\citep{JMLR:v12:pedregosa11a}.
When not specified, the default hyper-parameters are used.\\
To summarize, for each attribute of the original dataset, the entire column is selected, according to the type
of the data, we sample a feature transformer uniformly at random from a list of five, the transformation is applied, 
the output of the transformer is added to the dataset
and the original values fed to the transformer are dropped. 
This process is repeated 20 times for each original dataset, 
generating 20 different datasets from each of the UCI datasets listed above.
The BALMIX collection is obtained sampling datasets from the collections
derived from BANK, CCDEFAULT and COVTYPE.
For each of the 20 datasets, we trained XGBoost on 1000 randomly sampled configurations 
(the same for all the datasets)
using the number of rounds as resources level, tracking the predictive performance
using 1-F1 (or 1-F1micro for multi-class) for 512 rounds.\\
The configurations sampled by the optimizers can sampled uniformly at random
from the search space and the results of their evaluation is obtained by interpolation 
with nearest neighbor.

\section{XGBoost search spaces}
\label{a:searchspace}

The following search space has been used in our experiments
with XGBoost.

Hyperparameters:
\begin{itemize}

\item hp\_max\_depth: integer in $[1, 8]$ 
\item hp\_learning\_rate: float in $[0.001, 1.0]$ log-scale
\item hp\_reg\_lambda: float in $[0.000001, 2.0]$ log-scale
\item hp\_gamma: float in $[0.000001, 64.0]$ log-scale
\item hp\_reg\_alpha: float in $[0.000001, 2.0]$ log-scale
\item hp\_min\_child\_weight: float in $[0.000001, 32.0]$ log-scale
\item hp\_subsample: float in $[0.2, 1.0]$
\item hp\_colsample\_bytree: float in $[0.2, 1.0]$
\item hp\_tree\_method: categorical in $["exact", "hist", "approx"]$

\end{itemize}

\section{Standard deviation for predictive performance}
\label{a:std}

In this section we report the standard deviation for the predictive
performance measurements reported in Table~\ref{tab:acc}.

\begin{table}[h!]
\centering

\begin{tabular}{llr}
\toprule
Optimizer & Dataset &    Stdev       \\
\midrule
Autorange(BB)+SH & BALMIX &  0.254994 \\
                 & BANK &  0.022924 \\
                 & CCDEFAULT &  0.030943 \\
                 & COVTYPE &  0.044704 \\
                 & FCNET &  0.000146 \\
Autorange(BB)+HB & BALMIX &  0.246499 \\
                 & BANK &  0.012394 \\
                 & CCDEFAULT &  0.023578 \\
                 & COVTYPE &  0.046303 \\
                 & FCNET &  0.000112 \\
\midrule
RUSH             & BALMIX &  0.353384 \\
                 & BANK &  0.024797 \\
                 & CCDEFAULT &  0.010956 \\
                 & COVTYPE &  0.068631 \\
                 & FCNET &  0.000101 \\
                 & NAS201 &  0.263551 \\
SH               & BALMIX &  0.349033 \\
                 & BANK &  0.021733 \\
                 & CCDEFAULT &  0.008713 \\
                 & COVTYPE &  0.065947 \\
                 & FCNET &  0.000124 \\
                 & NAS201 &  0.626832 \\
HB-RUSH          & BALMIX &  0.349084 \\
                 & BANK &  0.035077 \\
                 & CCDEFAULT &  0.023711 \\
                 & COVTYPE &  0.064896 \\
                 & FCNET &  0.000043 \\
                 & NAS201 &  0.184094 \\
HB               & BALMIX &  0.356893 \\
                 & BANK &  0.025858 \\
                 & CCDEFAULT &  0.010542 \\
                 & COVTYPE &  0.070216 \\
                 & FCNET &  0.000043 \\
                 & NAS201 &  0.282075 \\
\bottomrule
\end{tabular}

\caption{Standard deviation on the average performance obtained in the first experiment}
\label{tab:std1}
\end{table}

\section{Deep dive on FCNET}
\label{a:fcnet}

\begin{table*}[h!]
\centering
\begin{tabular}{ccc}
\includegraphics[width=0.45\textwidth]{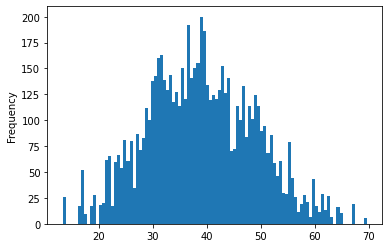}&
\includegraphics[width=0.45\textwidth]{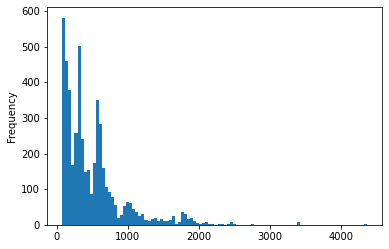}
\end{tabular}
\caption{Distribution of the cost of different configurations at the minimum resources level.}
\label{tab:varcost} 
\end{table*}

We further investigate the results obtained by SH and RUSH on FCNET.
We identified a number of hyperparameters configurations with a disproportionately
high cost being sampled by both SH and RUSH (see Table~\ref{tab:varcost}).
Moreover, these extremely costly configurations often do not provide particularly good
performance and get pruned at the end of the first rung. This leads to 
much cheaper rungs after the first one, and the overall time-cost being dominated
by the first rung. While we have are not in the position of investigate this further,
it is possible that the huge time variance could be due to external events happening
during the training process (e.g., other processes running on the same machine). These
extreme conditions are not present in NAS201.
Since none of the algorithm used in our experiments directly controls the time-cost
associated with the evaluation of different configurations, but RUSH relied to lower
the numbers of configurations evaluated to reduce cost, the possibilities
to directly impact the total cost are limited.
An interesting scientific challenge will be to design an algorithm explicitly
controlling the cost.

\section{Results with different budget values}
\label{a:diffbudget}
To validate the robustness of RUSH respect to the available budget, we run a comparison
between RUSH and SH with different budget levels. The results reported in \ref{tab:diffbudget}
show that in some cases RUSH can be slightly better but, in general,
there is no significant difference between RUSH and SH. 

\begin{table*}[h!]
\centering
\begin{tabular}{ccc}
\includegraphics[width=0.32\textwidth]{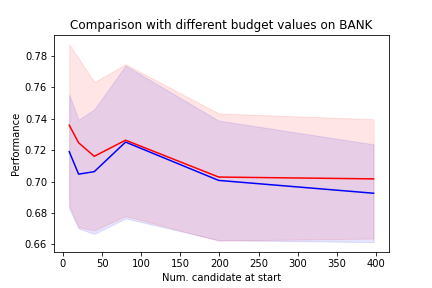}&
\includegraphics[width=0.32\textwidth]{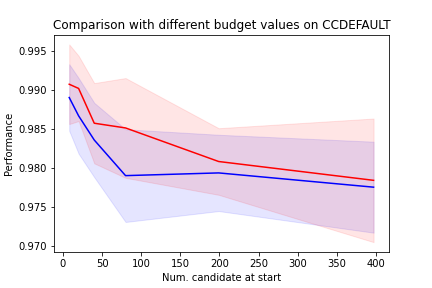} &
\includegraphics[width=0.32\textwidth]{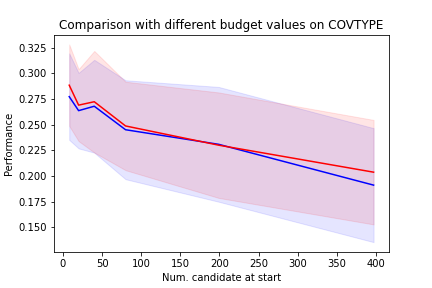} \\
\includegraphics[width=0.32\textwidth]{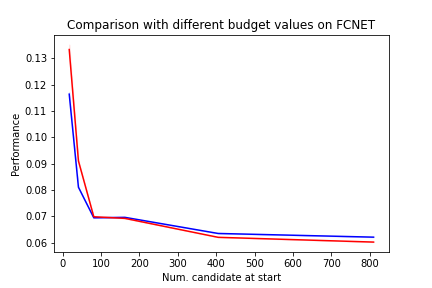}&
\includegraphics[width=0.32\textwidth]{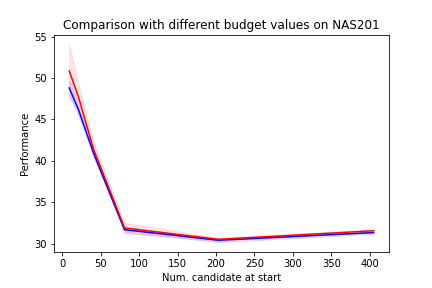} &
\includegraphics[width=0.32\textwidth]{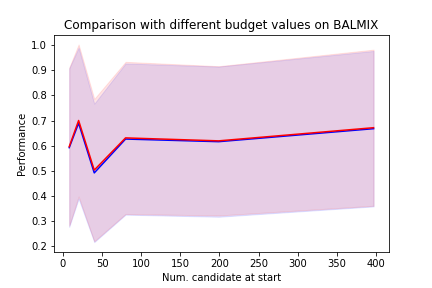}  
\end{tabular}
\caption{Performance of the model vs budget allocated for the optimizer. Each point on the x-axis correspond
to a different budget provided to the optimizer and each point on the y-axis correspond to the best performance obtained
when the budget of the optimizer is exhausted. Results are averaged on all the datasets of the same category.}
\label{tab:diffbudget}
\end{table*}

\section{Negative transfer additional results}
\label{a:negativetime}

In this section we report the time difference between RUSH, Successive Halving and Hyperband.
Higher percentages means significant cost reductions, while percentages close to zero mean
there is no significant difference.

\begin{table}[h!]
\centering
\scriptsize
\begin{tabular}{lrr}
\toprule
Dataset &  Time SH vs SH-RUSH &  Time HB vs HB-RUSH  \\
\midrule
BANK    &           -0.0609 &           -0.5168 \\
INVBANK &           -0.2811 &           -1.0856 \\
\bottomrule
\end{tabular}
\caption{Time gain, in percentage, observed by comparing SH with RUSH and HB with HB-RUSH.}
\label{tab:netime} 
\end{table}

\end{document}